\definecolor{officegreen}{rgb}{0.0, 0.5, 0.0}
\def\R{\mathbb{R}}
\def\est{\textnormal{est}}
\def\obs{\textnormal{obs}}
\def\W{\mathrm{W}}
\DeclareMathOperator*{\diag}{\mathrm{diag}}
\newcommand{\bracket}[1]{\left\langle#1\right\rangle}
\newtheorem{theorem}{Theorem}[section]
\title{Recovering Wasserstein Distance Matrices from Few Measurements}
\author[1]{Muhammad Rana}
\author[3]{Abiy Tasissa}
\author[4]{HanQin Cai}
\author[1]{Yakov Gavriyelov}
\author[1,2]{Keaton Hamm}
\affil[1]{Department of Mathematics, University of Texas at Arlington}
\affil[2]{Division of Data Science, University of Texas at Arlington}
\affil[3]{Department of Mathematics, Tufts University, Medford, MA 02155, USA.}
\affil[4]{Department of Statistics and Data Science and Department of Computer Science\\University of Central Florida}
\date{}
\begin{document}
\allowdisplaybreaks
%
\maketitle
\begin{abstract}
This paper proposes two algorithms for estimating square Wasserstein distance matrices from a small number of entries. These matrices are used to compute manifold learning embeddings like multidimensional scaling (MDS) or Isomap, but contrary to Euclidean distance matrices, are extremely costly to compute. We analyze matrix completion from upper triangular samples and Nystr\"{o}m completion in which $\mathcal{O}(d\log(d))$ columns of the distance matrices are computed where $d$ is the desired embedding dimension, prove stability of MDS under Nystr\"{o}m completion, and show that it can outperform matrix completion for a fixed budget of sample distances. Finally, we show that classification of the OrganCMNIST dataset from the MedMNIST benchmark is stable on data embedded from the Nystr\"{o}m estimation of the distance matrix even when only 10\% of the columns are computed.
\end{abstract}
%
%
\section{Introduction}
\label{sec:intro}

The bare representation of data in many applications is high-dimensional \cite{zheng2009statistical}. 
In today’s era of big data, this high dimensionality poses significant challenges and necessitates specialized methods. Manifold learning refers to a class of techniques that compress data representations into lower-dimensional forms, making them more suitable for analysis and computation. 
A widely used linear technique is principal component analysis (PCA). 
However, linear approaches can be restrictive when the data lie on or near a nonlinear manifold. To address this, nonlinear manifold learning techniques aim to obtain compact representations while preserving local geometric structures \cite{meilua2024manifold}.
We note that, in the standard way they are used, all of the aforementioned methods typically treat data as vectors in a Euclidean space.

While many applications yield vector data, there are a number of domains that produce data that may be best modeled as distributions or measures. For example, gene expression and natural language processing data can be represented as distributions over gene networks or dictionaries, respectively, and image data can be considered as discrete distributions over a 2-dimensional pixel grid. Recent works have used this notion to improve understanding of data geometry and performance on tasks such as classification or clustering \cite{mathews18,kusner2015word,wang2010optimal}. One of the advantages of treating images, for example, as measures, is that spatial relationships are not obscured as they are under vectorization. Moreover, the natural notion of a metric structure on the space of measures is that of the Wasserstein metric from optimal transport (OT) \cite{peyre2019computational}. 


Several works have explored manifold learning using probability measures and the Wasserstein metric, e.g., \cite{wang2010optimal,hamm2023wassmap}. This approach has shown improvement in task performance and data modeling, but it comes with the significant drawback of slow computation time. In particular, computing one Wasserstein distance exactly for discrete measures supported on $m$ points costs $\Omega(m^3\log(m))$ flops. To form neighbor embeddings for $N$ data measures requires computing the square distance matrix $D_{ij}=\W_2(\mu_i,\mu_j)^2$, carrying total complexity $\mathcal{O}(N^2m^3\log(m))$.

Mitigating this high computational cost can be approached in two ways: 1) speeding up a single Wasserstein distance computation, and 2) computing fewer entries of $D$ to form an approximation of it. Many methods from computational OT explore the first approach, e.g., through entropic regularization, multiscale methods, or Nystr\"{o}m approximations to the cost matrix in the discrete formulation of the Kantorovich problem (see \cite{peyre2019computational} for a survey). In contrast, relatively few works have explored the problem of estimating an entire Wasserstein distance matrix directly from samples.

The focus of this paper is to propose algorithms for estimating square Wasserstein distance matrices for use in neighbor embeddings by computing as few entries as possible. Our approach is based on exploiting the approximate low-rank structure of Wasserstein squared distances: if the matrix 
$D$ is approximately rank $r$, we propose two algorithms based on matrix completion and the Nystr\"{o}m method which allow us to estimate $D$ from $\mathcal{O}(r\log(r))$ fully observed columns and maintain stability of the MDS embedding. 


\section{Technical background}
\subsection{Optimal transport and MDS}
The Kantorovich formulation of the quadratic Wasserstein metric on discrete probability measures $\mu=\sum_{i=1}^n\alpha_i\delta_{x_i}$ and $\nu=\sum_{i=1}^m\beta_i\delta_{y_i}$ is
\begin{equation}\label{eq:W2-metric}
    \W_2(\mu,\nu) := \min_{P\in\Gamma(\mu,\nu)} \bracket{C,P}
\end{equation}
where $\Gamma(\mu,\nu)=\{P\in\R^{m\times n}_+: P\mathbf{1}_m = \alpha, P^\top\mathbf{1}_n=\beta\}$ are couplings with marginals $\mu$ and $\nu$, and $C_{ij} = \|x_i-y_i\|^2$ \cite{peyre2019computational}. This problem computes an optimal matching of mass between two discrete measures. Given measures $\{\mu_i\}_{i=1}^N\subset\W_2(\R^n)$, the Wassmap algorithm \cite{wang2010optimal,hamm2023wassmap} computes $D_{ij} = \W_2(\mu_i,\mu_j)^2$, a double centering of it $B = -\frac12HDH$ where $H = I-\frac1N\mathbf{1}\mathbf{1}^\top$, with truncated SVD $B_d = V_d\Sigma_dV_d^\top$. The embedding vectors in $\R^d$ are then rows of $V_d\Sigma_d$. 

\subsection{Matrix completion}
Given a low-rank matrix with missing entries, the problem of estimating the unknown entries is known as the matrix completion problem. This problem has been studied extensively due to its broad applications, most notably in recommendation systems \cite{ramlatchan2018survey}. A naive formulation seeks to minimize the rank of the matrix subject to the constraints imposed by the observed entries; however, rank minimization is computationally hard in general \cite{meka2008rank}. A common convex relaxation is to minimize the nuclear norm, which serves as a surrogate for rank. In this convex setting, seminal theoretical results show that, under random sampling, an $N \times N$ rank-$r$ matrix can be exactly recovered from $\mathcal{O}(\nu r N  \log N)$ observed entries, where $\nu$ is an incoherence parameter capturing the conditioning of the underlying matrix \cite{candes2006robust}. Given the computational challenges of convex methods, non-convex algorithms have also been explored, and similar recovery guarantees have been established in this setting \cite{chi2019nonconvex}. 
In this paper, we focus on the completion of distance matrices. In the case of Euclidean distance matrices, it can be shown that the squared distance matrix has rank at most $d+2$, where $d$ is the embedding dimension of the underlying points \cite{liberti2014euclidean}. This structural property enables the use of low-rank optimization techniques for Euclidean distance matrix completion \cite{moreira2018novel,tasissa2018exact}.

\subsection{Nystr\"{o}m method}
Consider a symmetric matrix in block form $D=\begin{pmatrix}A & B\\
B^{\top} & C\end{pmatrix}$, where $A\in \R^{m\times m}$ and $B\in \R^{m\times n}$. We are interested in the regime $m\ll n$, where the aim is to estimate the block $C$ using only the smaller blocks $A$ and $B$. Suppose that  $\mathrm{rank}(A)=\mathrm{rank}(D)$. Then, one can show that the columns of the submatrix $\begin{pmatrix}B\\C   \end{pmatrix}$ lie in the span of the columns of $\begin{pmatrix}A\\B^\top   \end{pmatrix}$. Specifically, there exists an $X\in \R^{m\times n}$ such that $B=AX$ and $C=B^{\top} X$. If $A$ is invertible, then $X=A^{-1}B$, which yields the exact relation $C=B^\top A^{-1}B$. In the more general case where $A$ is not invertible, we may instead use the Moore--Penrose pseudoinverse to obtain  $X=A^{\dagger}B$ leading to $C=B^{\top}A^{\dagger}B$. This procedure is known as the Nystr\"{o}m method \cite{williams2000using}. 

While it is not a completion algorithm per se, the Nystr\"{o}m method can be used to complete a distance matrix. In particular, if only the columns containing $A$ and $B^\top$ are computed, then the entries of $C$ can be recovered, or in the general case approximated, by $B^{\top}A^\dagger B$.

\vspace{0.08in}
\noindent \textbf{Notations} For a distance matrix $D$, owing to symmetry and zero diagonal entries, we only consider samples from the strictly upper-triangular part. We denote by $\Omega \subset\{(i,j):i\in[N],j>i\}$ the set of sampled index pairs, with cardinality $|\Omega|=m$. Given a matrix $D$ and an index set $I\subset[N] := \{1,\dots,N\}$, we write $D(:,I)$ for the submatrix of 
$D$ consisting of the columns indexed by $I$. Analogously, $D(I,:)$ denotes the submatrix formed by the rows indexed by $I$. $P_\Omega(D)_{ij} = D_{ij}$ if $(i,j)\in\Omega$ and is 0 otherwise. If $\mathrm{rank}(D)=r$ and $D = V_r\Sigma_rV_r^\top$, then the incoherence $\nu$ of $D$ is $\sqrt{n/r}\max_i\||V_r(i,:)\|$. 

\section{Algorithms}

Here we describe two algorithms for estimating the Wasserstein distance matrix to compute embeddings of images. Algorithm \ref{alg:completion}, $\W_2$\texttt{-MC}, randomly computes a given number of entries of the distance matrix, and then uses a matrix completion algorithm to estimate the full squared distance matrix. We leave the completion algorithm as a generic plug-in for existing algorithms, and consider one designed for distance matrices in our experiments. We sample from the strict upper triangle, include the diagonal, and reflect the entries across the diagonal to obtain the observed distance matrix.
We apply the same procedure after completion to ensure that the resulting matrix retains the properties of a distance matrix.

\begin{algorithm}[h!]
\SetKwInOut{Input}{Input}

\textbf{Input:} Data measures $\{\mu_i\}_{i=1}^N$, sample size $m$, matrix completion subroutine \texttt{MC}

 Randomly sample $m$ entries $\Omega$ 
 
 Compute observed distance matrix $D_\obs = P_\Omega(D)$, where $D_{ij}=\W_2(\mu_i,\mu_j)^2$

 $\diag(D_\obs) = 0$ \textcolor{officegreen}{\algorithmiccomment{enforce 0 diagonal}}

 $D_\obs = D_\obs+D_\obs^\top$ \textcolor{officegreen}{\algorithmiccomment{reflect across diagonal}}
  
 $D_\est = \texttt{MC}(D_\obs)$


 $D_\est = \frac12(D_\est+D_\est^\top)$ \textcolor{officegreen}{\algorithmiccomment{enforce symmetry}}
 
\textbf{Output:} Estimated distance matrix $D_\est$ 
 \caption{$\W_2$\texttt{-MC}}\label{alg:completion}
\end{algorithm}

Algorithm \ref{alg:Nystrom}, $\W_2$\texttt{-Nystr\"{o}m}, uses the Nystr\"{o}m method to complete the distance matrix. In particular, we compute only a set of columns of the distance matrix $C_\obs = D(:,I)$, and estimate $D_\est = C_\obs UC_\obs^\top$ where $U = C_\obs(I,:)=D(I,I)$. As in $\W_2$\texttt{-MC}, we also enforce the distance matrix structure by enforcing the 0 diagonal. 

\begin{algorithm}[h!]

\textbf{Input:}\,Data measures$\{\mu_i\}_{i=1}^N$, number of columns $c$ 

 Randomly sample $c$ columns $I\subset[N]$ 
 
 Compute columns of distance matrix $C_\obs = \begin{cases} \W_2(\mu_i,\mu_j)^2, & i\in [N], j\in I\\ 0, & \textnormal{otherwise}.\end{cases}$ 

 
 $D_\est = C_\obs UC_\obs^\top$, where $U = C_\obs(I,:)$

$\diag(D_\est) = 0$ \textcolor{officegreen}{\algorithmiccomment{enforce 0 diagonal}}

 
\textbf{Output:} Estimated distance matrix $D_\est$
 \caption{$\W_2$\texttt{-Nystr\"{o}m}}\label{alg:Nystrom}
\end{algorithm}


Note that for large $N$, the computational cost of both algorithms is dominated by the cost of computing the $\W_2$ distances, and the cost of the matrix completion subroutine \texttt{MC} is often negligible by comparison. Thus, algorithms that limit sampling complexity of the matrix are more desirable. For this reason, we do not truncate the rank of the Nystr\"{o}m approximation to $D$ in $\W_2$\texttt{-Nystr\"{o}m}, which is usually done to save computation time.

The Nystr\"{o}m method is well-studied, and using some existing bounds, we are able to prove a stability result for MDS embeddings on Wasserstein distance matrices. We assume that the data measures in $\W_2(\R^n)$ are isometric to a set of Euclidean vectors $\{y_i\}_{i=1}^N\subset\R^d$. Examples of such sets have been studied and include common image articulations like translations, scalings, and shears of a fixed base measure \cite{khurana2022supervised,hamm2023wassmap}. We assume noise (or error) in the computation of the $\W_2$ distances, and consider how the MDS embedding $\{z_i\}_{i=1}^N\subset\R^d$ coming from $D_\est$ of $\W_2$\texttt{-Nystr\"{o}m} compares to $\{y_i\}$. MDS embeddings are centered and equivalent up to orthogonal transformation, so we measure error by a Procrustes distance, defined as $\min_{Q \in O(d)}\, \|Z-QY\|_2$, where $O(d)$ is the set of orthogonal $d\times d$ matrices. Note that $Z$ and $Y$ are matrices whose columns are the embedded vectors.

\begin{theorem}\label{thm:nystrom_approx}
    Suppose $\{\mu_i\}_{i=1}^N\subset \W_2(\R^n)$, and $D_{ij}=\W_2(\mu_i,\mu_j)^2$ has incoherence $\nu$. Suppose there exist $\{y_i\}_{i=1}^n$ $\subset\R^d$ such that $D_{ij}=\|y_i-y_j\|^2$. Let $\lambda>0$, $\delta\in(0,1)$, and compute $m\gtrsim \lambda\nu^2(d+2)\max\{\log(d+2),\log(1/\delta)\}$ columns uniformly randomly in $\W_2$\texttt{-Nystr\"{o}m} with error $C_{\obs_{i,j}} = \W_2(\mu_i,\mu_j)^2+E_{ij}$ such that $\|E\|_2 \leq \sigma_{d+2}(D)/3$. If $\|Y^\dagger\|_2\|E\|_2^{1/2}\leq 1/\sqrt{2}$, then $D_\est=C_\obs UC_\obs^\top$ yields an MDS embedding $\{z_i\}_{i=1}^N\subset\R^d$ such that
    \begin{multline*}\min_{Q\in O(d)}\|Z-QY\|_2 \leq (1+\sqrt{2})\|Y^\dagger\|_2\|E\|_2\;\times\\ \left(5\frac{N}{(d+2)(1-\delta)}+3\sqrt{\frac{N}{(d+2)(1-\delta)}}+2\right)\end{multline*}
    with probability at least $1-\max\{re^{-\lambda(\delta+(1-\delta)\log(1-\delta))},\delta\}$.
\end{theorem}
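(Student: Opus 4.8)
The plan is to chain together three layers of perturbation analysis: (i) control how the Nyström approximation $D_\est = C_\obs U C_\obs^\top$ deviates from the true $D$ when we feed it noisy columns $C_\obs = C + E(:,I)$; (ii) translate that matrix-level error into an error on the double-centered Gram matrix $B_\est = -\tfrac12 H D_\est H$ versus $B = -\tfrac12 H D H$; and (iii) convert the Gram-matrix error into a Procrustes bound on the MDS embeddings via a Davis–Kahan / Wedin-type argument. The sampling count $m\gtrsim \lambda\nu^2(d+2)\max\{\log(d+2),\log(1/\delta)\}$ is exactly the regime in which, with probability at least $1-\max\{re^{-\lambda(\delta+(1-\delta)\log(1-\delta))},\delta\}$, a uniformly random column submatrix $D(I,I)=U$ of the rank-$r=d+2$ matrix $D$ has the property that $\sigma_{\min}$ of the relevant $r\times m$ factor is bounded below by roughly $\sqrt{m/N}\,\sigma_{d+2}(D)$ up to the $(1-\delta)$ factor; this is the standard incoherence-based column-sampling guarantee (matrix Chernoff applied to the sum of sampled rows of the orthonormal factor $V_r$), and it is where the $N/((d+2)(1-\delta))$ and $\sqrt{N/((d+2)(1-\delta))}$ terms in the final bound originate.

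First I would set up the exact (noiseless) Nyström identity: since $\mathrm{rank}(A)=\mathrm{rank}(D)$ under the sampling event, $D = C U^\dagger C^\top$ exactly, so all the error comes from $E$. Writing $Y$ for the $d\times N$ embedding matrix of the $y_i$ and noting $D_{ij}=\|y_i-y_j\|^2$ so that $D = \mathbf{1}\mathrm{diag}(Y^\top Y)^\top + \mathrm{diag}(Y^\top Y)\mathbf{1}^\top - 2Y^\top Y$, the double centering kills the rank-one pieces and gives $B = Y^\top Y$ after centering $Y$ (WLOG $Y\mathbf{1}=0$). The key quantitative step is to bound $\|B_\est - B\|_2$ in terms of $\|E\|_2$, $\|Y^\dagger\|_2$, and the column-sampling condition number. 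I would expand $C_\obs U C_\obs^\top - C A^\dagger C^\top$ into a sum of terms, each linear or quadratic in $E$, using the hypothesis $\|E\|_2\le \sigma_{d+2}(D)/3$ to keep the perturbed pseudoinverse $U^\dagger$ within a constant factor of $A^\dagger$ (via the standard $\|(A+\Delta)^\dagger - A^\dagger\|$ bound valid when $\|\Delta\|\|A^\dagger\|<1$). The factor $5\,\frac{N}{(d+2)(1-\delta)} + 3\sqrt{\frac{N}{(d+2)(1-\delta)}} + 2$ then appears as: the quadratic-in-$E$ term contributes the $N/((d+2)(1-\delta))$ scale (two factors of $A^\dagger$, each carrying $\sqrt{N/((d+2)(1-\delta))}$ from the sampling bound), the cross terms contribute the square-root scale, and the benign term contributes the $+2$.

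Finally I would pass from $\|B_\est - B\|_2$ to the Procrustes distance. Both $B$ and (the rank-$d$ truncation of) $B_\est$ are PSD of rank $\le d$; writing $B = Y^\top Y$ and $B_\est \approx Z^\top Z$, a standard lemma (e.g. the one underlying classical MDS stability, or a square-root perturbation bound of the form $\min_{Q\in O(d)}\|Z - QY\|_2 \le \|B_\est - B\|_2 / \sigma_{\min}^{1/2}$, or more sharply $\le \|B_\est-B\|_2\,\|Y^\dagger\|_2$ when the gap is controlled) yields the stated bound, with the hypothesis $\|Y^\dagger\|_2\|E\|_2^{1/2}\le 1/\sqrt 2$ guaranteeing the square-root/truncation step stays in its valid regime and producing the $(1+\sqrt 2)$ prefactor. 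I expect the main obstacle to be step (i): carefully bookkeeping the perturbed-pseudoinverse expansion so that every term is controlled by the column-sampling singular-value bound \emph{simultaneously} on the high-probability event, and verifying that the constants collapse exactly to $5$, $3$, and $2$ rather than merely to "some absolute constants" — in particular making sure the $\sigma_{d+2}(D)/3$ slack is enough to absorb all the second-order terms. The probabilistic part (step on sampling) is essentially a citation to the known Nyström/column-subset guarantee, and step (iii) is a routine invocation of an MDS perturbation lemma, so the real work is the deterministic matrix-perturbation accounting in the middle.
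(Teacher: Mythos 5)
Your proposal is correct and follows essentially the same three-layer architecture as the paper's proof: a Procrustes/MDS stability bound of the form $(1+\sqrt{2})\|Y^\dagger\|_2\|D-D_\est\|_2$ (the paper cites Cloninger et al., Theorem 3.3), a deterministic Nystr\"{o}m perturbation bound producing the polynomial $5x^2+3x+2$ in $x=\|V_{d+2}(I,:)^\dagger\|_2$ (the paper cites Hamm--Huang, Remark 4.7), and an incoherence-based matrix Chernoff bound $\|V_{d+2}(I,:)^\dagger\|_2\le\sqrt{N/((d+2)(1-\delta))}$ on the high-probability sampling event that also guarantees exactness of the noiseless Nystr\"{o}m decomposition (the paper cites Talwalkar--Rostamizadeh and Tropp). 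The only difference is that you propose to re-derive the two deterministic ingredients from scratch, whereas the paper obtains all three by citation.
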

\begin{proof}
A simple modification of the proof of \cite[Theorem 3.3]{cloninger2023linearized} implies the upper bound with $(1+\sqrt{2})\|Y^\dagger\|_2\|D-D_\est\|_2$. Then \cite[Theorem 3]{talwalkar2014matrix} implies that the columns selected would yield an exact Nystr\"{o}m decomposition of $D$ without the presence of noise. Thus, the hypotheses of \cite[Remark 4.7]{hamm2021perturbations} are satisfied, and yield that $\|D-D_\est\|_2 \leq 5\|V_{d+2}(I,:)^\dagger\|_2^2+3\|V_{d+2}(I,:)^\dagger\|_2+2$ where $I$ is the set of columns selected by Algorithm \ref{alg:Nystrom} and $V_{d+2}$ are the first $d+2$ left singular vectors of $D$. Finally, \cite[Lemma 3.4]{tropp2011improved} shows that with the given probability, $\|V_{d+2}(I,:)^\dagger\|_2 \leq \sqrt{N/((d+2)(1-\delta))}$ for all $\delta\in(0,1).$
\end{proof}

A simple corollary is that if we compute $\mathcal{O}((d+2)\log(d+2))$ columns of $D$ without error, then the MDS embedding is exact up to orthogonal transformation. Exact MDS embeddings of $\W_2$ data were studied in \cite{hamm2023wassmap}.

\section{Numerical Experiments}

\subsection{Fixed sample budget}

We first do a fixed sample budget comparison of $\W_2$\texttt{-MC} and $\W_2$\texttt{-Nystr\"{o}m} for completing the Wasserstein distance matrix of $2,000$ points from the OrganCMNIST dataset from the MedMNIST benchmark \cite{yang2023medmnist}, comprising coronal plane CT scans of 11 different organs downsampled to size $28\times28$. For uniform sampling, we subsample the matrix at different rates $\gamma \in\{3\%, 5\%, 10\%, 20\%,25\%\}$. We then match the number of observed entries between uniform sampling and the Nystr\"{o}m method by solving the equation
$ \frac{c(c-1)}{2} + c(n-c) = \gamma \cdot \frac{n(n-1)}{2}$ for $c$ (the number of columns selected by Nystr\"{o}m). This equation equates the number of off-diagonal entries used by both methods. 
For \texttt{MC} in $\W_2$\texttt{-MC}, we use the factorized Gram matrix formulation from \cite{tasissa2018exact} based on the relationship between the Gram matrix and the square distance matrix:
\begin{align}
\mathrm{Find}\quad  &P\in\mathbb{R}^{n\times q}, \nonumber\\
\mathrm{s.t.}\quad  &(PP^\top)_{ii} + (PP^\top)_{jj} - 2(PP^\top)_{ij} = D_{ij}, \nonumber\\
&P^\top 1 = 0, \label{eq:trace objective}
\end{align}
where $q$ denotes the estimate for the rank of the underlying points. The centering constraint can be removed by reparameterization. Defining $\mathcal{A}:\mathbb{R}^{n\times n}\to\mathbb{R}^{|\Omega|}$ as
\[
    \mathcal{A}(X)_{\alpha} = X_{ii} + X_{jj} - 2X_{ij}, \quad \alpha=(i,j)\in\Omega,
\]
the augmented Lagrangian is
\begin{equation}\label{eq:augmented Lagrangian}
\bm{L}(Q;\Lambda) = \tfrac{1}{2}\left\Vert \mathcal{A}(QQ^\top) - b + \Lambda\right\Vert_2^2,
\end{equation}
with $b$ is the vector of observed distances and $\Lambda\in\mathbb{R}^{|\Omega|}$. 
The algorithm starts with an initial estimate of $Q$ (randomly initialized in our experiments) and then alternates between two steps: updating $Q$ via Barzilai–Borwein descent, and updating the Lagrange multipliers. 
For the dataset in consideration, we set the rank estimate to be $200$.

Table \ref{tab:organC_classification} reports the results averaged over 10 random trials. We observe that for fixed sample budget, the Nystr\"{o}m method consistently yields better accuracy compared to matrix completion.

\begin{table}[ht!]
\centering
\begin{tabular}{c|cc}
\toprule
Rate / \# of cols. & $\W_2$\texttt{-Nystr\"{o}m} & $\W_2$\texttt{-MC} \\
\midrule
25\%  / 268 & $~~~1.58\times 10^{-2}$ & $~~~5.20\times 10^{-2}$ \\
 & $\pm1.96\times 10^{-3}$ & $\pm1.91\times 10^{-2}$ \\
20\%  / 211 & $~~~1.85\times 10^{-2}$  & $~~~3.42\times 10^{-2}$ \\
 &  $\pm2.2\times 10^{-3}$  & $\pm8.26\times 10^{-5}$  \\
10\%  / 103 & $~~~2.85\times 10^{-2}$   & $~~~5.65\times 10^{-2}$ \\
 &  $\pm2.60\times 10^{-3}$   & $\pm6.01\times 10^{-3}$ \\
5\%   / 51 & $~~~4.82\times 10^{-2}$  & $~~~1.01\times 10^{-1}$  \\
 & $\pm8.59\times 10^{-3}$ & $\pm2.50\times 10^{-2}$ \\
3\%   / 30  & $~~~7.21\times 10^{-2}$   & $~~~1.32\times 10^{-1}$  \\
  & $\pm1.20\times 10^{-2}$  & $\pm9.77\times 10^{-3}$  \\
\bottomrule
\end{tabular}
\vspace{-0.05in}
\caption{Relative error (mean $\pm$ standard deviation) of Nystr\"{o}m and matrix completion under different sampling rates of a $2000\times2000$ OrganCMNIST $\W_2$ distance matrix. For the Nystr\"{o}m method, the numbers of columns associated with the sample rates are also reported. }\label{tab:organC_classification}
\end{table}

While both Nyström and  typical matrix completion algorithms scale similarly up to logarithmic factors, the Nyström method is a single-step procedure, whereas the cost of the matrix completion algorithm depends on the target accuracy, with the number of iterations required to reach a given error often dominating the runtime.
In our experiments, we run $\W_2$\texttt{-MC} for a sufficient number of iterations, as we consistently observe stable convergence across trials. In practice, choosing a suitable  stopping criterion for matrix completion problems could also be challenging. 
Therefore, this fixed sample budget experiment leads us to conclude that
computing full columns of $\W_2$ distance measurements may be preferred for approximating Wasserstein distance matrix. Studying how far this generalizes is the subject of future work.
\subsection{Classification stability}
We next consider the stability of classification performance under the Nyström approximation as the number of sampled columns varies. Specifically, we compute $5\%,10\%,20\%,\dots,$ $100\%$ of the columns of the $2000 \times 2000$ OrganCMNIST distance matrix using $\W_2$\texttt{-Nystr\"{o}m}. The resulting approximation $D_\est$ is embedded via MDS, followed by a $90-10$ train-test split. For the MDS embedding, we set the dimension to $120$, chosen such that the retained singular values account for $97\%$ of the total sum of singular values.  
Classification is then performed on the embeddings using linear discriminant analysis (LDA), 1-nearest neighbor (KNN1), kernel SVM with Gaussian kernel (SVM\_RBF), and random forest. We note that the dataset has $11$ underlying classes, and we evaluate the quality of the classification using classification accuracy, defined as the proportion of correctly identified data points. 
Figure \ref{fig:organC_classification} shows classification accuracy as a function of the number of columns sampled.

\begin{figure}[!h]t

  \centering
  \vspace{-0.1in}
  \includegraphics[width=.8\linewidth]{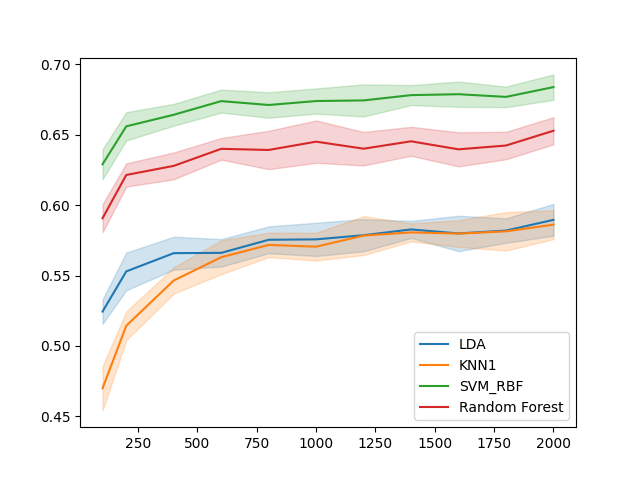}
\vspace{-0.2in}
\caption{Classification accuracy (mean $\pm$ 1 standard deviation of 100 trials) vs. number of columns computed in Algorithm \ref{alg:Nystrom} for 2000 OrganCMNIST data points.} 
\label{fig:organC_classification}
\end{figure}

The main conclusion we draw from Figure \ref{fig:organC_classification} is that the classification performance is quite stable after computing about 10--20\% of the columns of the distance matrix. This result and Theorem \ref{thm:nystrom_approx} suggest that the Nystr\"{o}m Wassmap algorithm (MDS on the output of Algorithm \ref{alg:Nystrom}) is stable after sampling $\mathcal{O}((d+2)\log(d+2))$ columns of $D$.


\section{Conclusion}
We showed that the Nystr\"{o}m method yields a good approximation of $\W_2$ distance matrices, leading to stable MDS embeddings. On the OrganCMNIST dataset,  we showed that classification accuracy  is stable even when only sampling a small fraction of columns, and for a fixed sampling budget, the Nystr\"{o}m method outperforms a distance matrix  completion approach. Unlike classical Euclidean distances, where the tight link between Gram and distance matrices grounds the theory of multidimensional scaling, the structural properties of $\W_2$ distance matrices are far less understood. This gap makes it especially important to develop principled methods that reduce the cost of computing  $\W_2$ distances.
In future work, we will explore theoretical conditions under which $\W_2$ distance matrices admit approximate low-rank structure or fast spectral decay, as well as explore hybrid strategies that combine Nyström-selected columns with partial entry sampling similar following ideas similar to \cite{cai2023matrix}.

\section*{Acknowledgements}
Abiy Tasissa acknowledges partial support from the National Science Foundation through grant
DMS-2208392. HanQin Cai acknowledges partial support from the National Science Foundation through grant
DMS-2304489. 

Muhammad Rana and Keaton Hamm were partially supported by a Research Enhancement Program grant from the College of Science at the University of Texas at Arlington and by the Army Research Office under Grant
Number W911NF-23-1-0213. The views and conclusions contained in this document are those of the authors and
should not be interpreted as representing the official policies, either expressed or implied, of the Army Research
Office or the U.S. Government. The U.S. Government is authorized to reproduce and distribute reprints for
Government purposes notwithstanding any copyright notation herein.





\bibliographystyle{IEEEbib}

\end{document}